\documentclass{article}
\usepackage{amsmath, amssymb, amsthm}
\usepackage{enumerate}
\usepackage{spconf,graphicx}
\usepackage[ruled,linesnumbered]{algorithm2e}
\usepackage{booktabs,multirow}
\usepackage{subcaption}
\usepackage{hyperref}

\newcommand{\bfphi}{\boldsymbol{\phi}}
\newcommand{\bftheta}{\boldsymbol{\theta}}
\newcommand{\bfalpha}{\boldsymbol{\alpha}}
\newcommand{\bfmu}{\boldsymbol{\mu}}
\newcommand{\loss}{\mathcal{L}}
\newcommand{\reg}{\mathcal{R}}
\newcommand{\blk}{\mathcal{B}}
\newcommand{\tasks}{\mathcal{T}}
\newcommand{\lin}{\text{lin}}
\newcommand{\diag}{\text{diag}}
\newcommand{\data}[1]{\mathcal{D}^{\mathrm{#1}}}
\newcommand{\argmin}{\operatornamewithlimits{argmin}}
\newcommand{\argmax}{\operatornamewithlimits{argmax}}

\newtheorem{theorem}{Theorem}
\SetKwInput{KwInit}{Initialization}{}{}

\title{Meta-Learning with Versatile Loss Geometries\\ for Fast Adaptation Using Mirror Descent}
%
\name{Yilang Zhang, Bingcong Li, Georgios B. Giannakis \thanks{This work was supported by NSF grants 2126052, 2128593, 2212318, 2220292, and 2312547; Emails: \{zhan7453,lixx5599,georgios\}@umn.edu}}
\address{Dept. of ECE, University of Minnesota, Minneapolis, MN 55455, USA}
%
%
%
\begin{document}
%
\setlength{\abovedisplayskip}{4pt}
\setlength{\belowdisplayskip}{4pt}
\maketitle
\begin{abstract}
Utilizing task-invariant prior knowledge extracted from related tasks, meta-learning is a principled framework that empowers learning a new task especially when data records are limited. 
A fundamental challenge in meta-learning is how to quickly ``adapt'' the extracted prior in order to train a task-specific model within \textit{a few} optimization steps. Existing approaches deal with this challenge using a preconditioner that enhances convergence of the per-task training process. Though effective in representing locally a quadratic  training loss, these simple linear preconditioners can hardly capture complex loss geometries. The present contribution addresses this limitation by learning a nonlinear mirror map, which induces a versatile distance metric to enable capturing and optimizing a wide range of loss geometries, hence facilitating the per-task training. Numerical tests on few-shot learning datasets demonstrate the superior expressiveness and convergence of the advocated approach. 
\end{abstract}
\begin{keywords}
Meta-learning, bilevel optimization, mirror descent, loss geometries
\end{keywords}
\section{Introduction}
\label{sec:intro}
The success of deep learning relies heavily on large-scale and high-dimensional models, which require extensive training using a large number of data. However, this ``data-driven learning'' approach is not feasible in applications where data are scarce due to costly data collection and labelling process. Examples of such applications include drug discovery~\cite{drug}, machine translation~\cite{machine-trans}, and robot manipulation~\cite{robot}. 

In contrast, \emph{meta-learning} offers a powerful approach for learning a task in data-limited setups. Specifically, meta-learning extracts \textit{task-invariant prior} information from a collection of given tasks, that can subsequently aid learning of a new, albeit related task. Although this new task may have limited training data, the prior serves as a strong inductive bias that effectively transfers knowledge to aid its learning. In image classification for instance, a feature extractor learned from a collection of given tasks can act as a common prior, and thus benefit a variety of other image classification tasks. 

Depending on how this ``data-limited learning'' is performed, meta-learning algorithms can be categorized into neural network (NN)- and optimization-based ones. In NN-based ones, the per-task learning is viewed as an NN mapping from its training data to task-specific model parameters~\cite{Meta-LSTM, neural-attentive}. The prior information is encoded in the NN weights, which are shared and optimized across tasks. With the universality of NNs in approximating complex mappings granted, their black-box structure challenges their reliability and interpretability. On the other hand, optimization-based meta-learning alternatives interpret ``data-limited learning'' as a cascade of a few optimization iterations (a.k.a. adaptation) over the model parameters. The prior here is captured by the shared hyperparameters of the iterative optimizer. A representative of these alternatives is the model-agnostic meta-learning (MAML)~\cite{MAML}, which views the prior as a learnable task-invariant initialization of the optimizer. By starting from an informative initial point, the model parameters can rapidly converge to local minima within a few gradient descent (GD) steps. 
Building upon MAML, a series of variants have been proposed to learn different priors~\cite{iMAML,MetaOptNet,iBaML}. 

While optimization-based meta-learning has been proven effective numerically, recent studies suggest that its generalization and stability heavily rely on convergence of per-task optimization~\cite{iMAML, iBaML}. This motivates one to grow the number of descent iterations. However, this can be infeasible as the overall complexity of meta-learning scales linearly with the number of GD steps~\cite{iMAML}. Besides, using accelerated first-order optimizers, such as Adam~\cite{Adam}, introduces extra backpropagation complexity when optimizing the prior. To improve the per-task convergence without markedly adding to the complexity, another line of research focuses on second-order optimization using a learnable precondition matrix having simple form~\cite{MetaSGD, MetaMD,MetaCurvature, MT-net,WarpGrad, ECML}. In fact, the precondition matrix captures the local quadratic curvature of the training loss, and linearly transforms the gradient based on this curvature. To acquire more expressive preconditioners, recent advances suggest replacing the linear matrix multiplication with a nonlinear NN transformation~\cite{MetaKFO}. However, convergence of this NN-manipulated GD is an uncharted territory. 

The present work advocates learning a generic distance metric induced by a strictly increasing nonlinear mirror map, which enables efficient optimization over generic loss geometries. All in all, our contribution is three-fold.
\begin{enumerate}[i)]
	\vspace{-.15cm}
	\setlength{\itemsep}{-.075cm}
	\item Broadening linear preconditioners with guaranteed per-task convergence. 
	\item Blockwise inverse autoregressive flow (blockIAF) ensuring monotonicity and scalability of the mirror map. 
	\item Numerical tests showing superior performance and improved convergence compared to linear preconditioners. 
	\vspace{-.49cm}
\end{enumerate}

\section{Problem setup}
To enable ``data-limited learning'' of a new task, meta-learning forms task-invariant priors using a collection of given tasks indexed by $t = 1,\ldots,T$. Each task comprises a dataset $\data{}_t := \{ (\mathbf{x}_t^n, y_t^n) \}_{n=1}^{N_t}$ consisting of $N_t$ data-label pairs, which are split into a training subset $\data{trn}_t$, and a disjoint validation subset $\data{val}_t$. The new task, indexed by $\star$, contains a training subset $\data{trn}_\star$, and a set of test data $\{ \mathbf{x}_\star^n \}_{n=1}^{N_\star^{\mathrm{tst}}}$ for which the corresponding labels $\{ y_\star^n \}_{n=1}^{N_\star^{\mathrm{tst}}}$ are to be predicted. The key premise of meta-learning is that all the aforementioned tasks share related model structures or data distributions. Thus, one can postulate a large model shared across all tasks, along with distinct model parameters $\bfphi_t \in \mathbb{R}^d$ per individual task. But since the cardinality $N_t^\mathrm{trn} := |\data{trn}_t|$ can be much smaller than $d$, learning a task by directly optimizing $\bfphi_t$ over $\data{trn}_t$ is impractical. Fortunately, since $T$ is considerably large, a task-invariant prior can be learned using $\{ \data{val}_t \}_{t=1}^T$ to render per-task learning well posed. 

Letting $\bftheta \in \mathbb{R}^{d'}$ denote the vector parameter of the prior, the meta-learning objective can be formulated as a bilevel optimization problem. The lower-level trains each task-specific model by optimizing $\bfphi_t$ using $\data{trn}_t$ and $\bftheta$ from the upper-level. The upper-level adjusts $\bftheta$ by evaluating the optimized $\bfphi_t$ on the validation sets $\{ \data{val}_t \}_{t=1}^T$. The two levels depend on each other and yield the following nested objective
\begin{subequations}
\label{eq:bilevel-global}
\begin{align}
	\label{eq:bilevel-global-upper}
	&\min_{\bftheta} \sum_{t=1}^T \loss (\bfphi_t^* (\bftheta); \data{val}_t) \\
	\label{eq:bilevel-global-lower}
	&~\text{s.t.} ~~\bfphi_t^* (\bftheta) = \argmin_{\bfphi_t} \loss (\bfphi_t; \data{trn}_t) + \reg (\bfphi_t; \bftheta), ~\forall t
\end{align}
\end{subequations}
where $\loss$ is the loss function capturing each task-specific model fit, and $\reg$ is the regularizer accounting for the task-invariant prior. From the Bayesian viewpoint, $\mathcal{L}$ and $\reg$ represent the negative log-likelihood (nll), $-\log p(\mathbf{y}_t^{\mathrm{trn}} | \bfphi_t; \mathbf{X}_t^{\mathrm{trn}})$, and the negative log-prior (nlp) $-\log p(\bfphi_t ; \bftheta)$, where $\mathbf{X}_t^{\mathrm{trn}} := [\mathbf{x}_t^1,\ldots,\mathbf{x}_t^{N_t^\mathrm{trn}}]$ and $\mathbf{y}_t^\mathrm{trn} := [y_t^1,\ldots,y_t^{N_t^\mathrm{trn}}]^\top$ ($^\top$ denotes transpose). Bayes' rule then implies $\bfphi_t^* = \argmin - \log p(\bfphi_t | $ $\mathbf{y}_t^{\mathrm{trn}}$; $\mathbf{X}_t^{\mathrm{trn}}, \bftheta)$ is the maximum a posteriori (MAP) estimator. 

Reaching the global optimum $\bfphi_t^*$ is generally infeasible because the task-specific model is nonlinear. Hence, a prudent remedy is to rely on an approximate solver $\hat{\bfphi}_t \approx \bfphi_t^*$ obtained by a tractable optimizer. For instance, MAML replaces~\eqref{eq:bilevel-global-lower} with a $K$-step GD minimizing the nll:
\begin{equation}
\label{eq:GD}
	\bfphi_t^{(k)} (\bftheta) = \bfphi_t^{(k-1)} (\bftheta) - \alpha \nabla \loss (\bfphi_t^{(k-1)} (\bftheta); \data{trn}_t),~\forall t
\end{equation}
where $k=1,\ldots,K$ indexes iterations; initialization $\bfphi_t^{(0)} = \bfphi^{(0)} = \bftheta$; approximate solver $\hat{\bfphi}_t (\bftheta) = \bfphi_t^{(K)} (\bftheta)$; and $\alpha$ denotes the step size. Although $\reg (\bfphi_t; \bftheta) = 0$ in MAML, it has been shown that the GD solver satisfies~\cite{LLAMA}
\begin{equation*}
	\hat{\bfphi}_t (\bftheta) \approx \bfphi_t^* (\bftheta) = \argmin_{\bfphi_t} \loss (\bfphi_t; \data{trn}_t) + \frac{1}{2} \| \bfphi_t - \bftheta \|_{\mathbf{\Lambda}_t}^2, ~\forall t
\end{equation*}
where the precision matrix $\mathbf{\Lambda}_t$ is determined by $\nabla^2 \mathcal{L} (\bftheta; \data{trn}_t)$, $\alpha$, and $K$. This indicates that MAML's optimization strategy~\eqref{eq:GD} is approximately tantamount to an implicit Gaussian prior probability density function (pdf) $p(\bfphi_t; \bftheta) = \mathcal{N} (\bftheta, \mathbf{\Lambda}_t^{-1})$, with the task-invariant initialization serving as the mean vector. Alongside implicit priors, their explicit counterparts have also been investigated with various prior pdfs~\cite{iMAML, iBaML}. 

For both implicit and explicit priors, numerical studies~\cite{MetaSGD, MetaCurvature} and theoretical analyses~\cite{iMAML, iBaML} demonstrate that the gradient error for optimizing $\bftheta$ in~\eqref{eq:bilevel-global-upper} relies on the convergence accuracy of $\hat{\bfphi}_t$ relative to a stationary point. In addition, employing a large $K$ or complicated optimizers could prohibitively escalate the overall complexity for solving~\eqref{eq:bilevel-global}. As a consequence, attention has been directed towards preconditioned GD (PGD) solvers, as in the update
\begin{equation}
\label{eq:precond-GD}
	\bfphi_t^{(k)} (\bftheta) = \bfphi_t^{(k-1)} (\bftheta) - \alpha \mathbf{P} (\bftheta_P) \nabla \loss (\bfphi_t^{(k-1)} (\bftheta); \data{trn}_t)
\end{equation}
where $\bftheta_P$ parametrizes $\mathbf{P} \in \mathbb{R}^{d\times d}$, and the prior parameter is augmented as $\bftheta := [\bfphi^{(0)\top}, \bftheta_P^\top]^\top$. To ensure~\eqref{eq:precond-GD} incurs affordable complexity after preconditioning, $\mathbf{P}$ must have a simple enough structure so that $\mathbf{P} (\bftheta_P) \nabla \loss (\bfphi_t^{(k-1)}; \data{trn}_t)$ incurs computational complexity $\mathcal{O}(d)$. Examples of such structures include diagonal~\cite{MetaSGD, MetaMD}, block-diagonal~\cite{MetaCurvature, MT-net}, and NN-based~\cite{WarpGrad} matrices. A more generic preconditioner can be formed by replacing the linear transformation $\mathbf{P} (\bftheta_P) \nabla \loss (\bfphi_t^{(k-1)}; \data{trn}_t)$ with a nonlinear NN $f(\nabla \loss (\bfphi_t^{(k-1)}; \data{trn}_t); \bftheta_P)$~\cite{MetaKFO}, but unfortunately convergence of this alternative iterate may not be guaranteed. 

Essentially, GD conducts a pre-step greedy search with a quadratic loss approximation. To see this, let $\lin(\loss (\bfphi_t),\bar{\bfphi}_t)$ $:= \loss(\bar{\bfphi}_t; \data{trn}_t) + (\bfphi_t - \bar{\bfphi}_t)^\top \nabla \loss(\bar{\bfphi}_t; \data{trn}_t)$. Using this linearization of $\loss$ at $\bar{\bfphi}_t \in \mathbb{R}^d$, the GD update reduces to (cf.~\eqref{eq:GD})
\begin{equation}
\label{eq:GD-greedy}
	\hspace{-0.05cm}\bfphi_t^{(k)} = \argmin_{\bfphi_t} \lin(\loss(\bfphi_t), \bfphi_t^{(k-1)}) + \frac{1}{2\alpha} \| \bfphi_t - \bfphi_t^{(k-1)} \|_2^2
\end{equation}
where dependencies on $\bftheta$ are dropped hereafter for notational brevity. The term $\frac{1}{2\alpha} \| \bfphi_t - \bfphi_t^{(k-1)} \|_2^2$ implies the isotropic approximation $\nabla^2 \loss (\bfphi_t^{(k-1)}; \data{trn}_t) \approx \frac{1}{\alpha} \mathbf{I}_d$, while~\eqref{eq:precond-GD} refines the approximation as a more informative matrix $\frac{1}{\alpha}\mathbf{P}^{-1}$ (if invertible). This quadratic local approximation is particularly effective when $K$ is large and $\alpha$ is small, which gradually ameliorates $\bfphi_t^{(k)}$ to a stationary point. In meta-learning however, the standard setup relies on a small $K$ (e.g., $1$ or $5$) and a sufficiently large $\alpha$, so that the model can quickly adapt to the task with low complexity. This tradeoff highlights the need for learning more expressive loss geometries. 

\section{Loss Geometries using Mirror Descent}
Instead of quadratic approximations of the local loss induced by certain norms (e.g., $\| \cdot \|_2$ and $\| \cdot \|_{\mathbf{P}^{-1}}$), our fresh idea is a data-driven distance metric that captures a broader spectrum of loss geometries. This is accomplished by learning the so-termed ``mirror map,'' which will be introduced first. All the proofs are delegated to Appendix~\ref{app:proof}. 

\subsection{Modeling the loss geometry using the mirror map}
To generalize the (P)GD, we will replace the $\ell_2$-norm in~\eqref{eq:GD-greedy} with a generic metric $D_h$ to arrive at
\begin{equation}
\label{eq:MD-greedy}
	\bfphi_t^{(k)} = \argmin_{\bfphi_t} \lin(\loss(\bfphi_t), \bfphi_t^{(k-1)}) + \frac{1}{\alpha} D_h (\bfphi_t, \bfphi_t^{(k-1)})
\end{equation}
where $D_h (\bfphi_t, \bfphi_t^{(k-1)}) := h(\bfphi_t) - \lin (h(\bfphi_t), \bfphi_t^{(k-1)})$ is the Bregman divergence, and the associated distance-generating function $h: \mathbb{R}^d \mapsto \mathbb{R}$ is strongly convex to ensure the existence and uniqueness of the minimizer. As a result, $\nabla h$ is strictly increasing, and thus invertible\footnote{When $\nabla h$ is discontinuous but $h$ is proper, the inverse $(\nabla h)^{-1}$ is defined as $\nabla h^* (\mathbf{z}) := \argmax_{\bfphi} \bfphi^\top \mathbf{z} - h(\bfphi)$, where $h^* (\mathbf{z}) := \sup_{\bfphi} \bfphi^\top \mathbf{z} - h(\bfphi)$ is the Fenchel conjugate of $h$.}. Then, applying the optimality condition leads to the mirror descent (MD) update
\begin{equation}
\label{eq:MD}
	\bfphi_t^{(k)} = (\nabla h)^{-1} \big( \nabla h(\bfphi_t^{(k-1)}) - \alpha \nabla \loss (\bfphi_t^{(k-1)}; \data{trn}_t) \big).
\end{equation}
The invertible $\nabla h$, dubbed mirror map, connects $\bfphi_t$ in the primal space to $\nabla \loss$ in the dual space under the endowed metric $D_h$. As a special case, when choosing $h(\cdot) = \frac{1}{2}\| \cdot \|_2^2$, it is easy to verify that~\eqref{eq:MD} boils down to~\eqref{eq:GD} due to the self-duality of the $\ell_2$-norm. Likewise,~\eqref{eq:precond-GD} can be obtained with $h(\cdot) = \frac{1}{2}\| \cdot \|_{\mathbf{P}^{-1}}^2$, where $\nabla h$ reduces to a linear mapping. Function $h$ reflects our prior knowledge about the geometry of $\loss$. In particular, letting $h(\cdot) = \loss (\cdot; \data{trn}_t)$ (even when $\loss$ is not strong convex) in~\eqref{eq:MD-greedy} gives $\bfphi_t^{(k)} = \argmin_{\bfphi_t} \loss (\bfphi_t; \data{trn}_t)$, which is precisely the original nll minimization solved in~\eqref{eq:GD} and~\eqref{eq:precond-GD}. Thus, an ideal choice of $h$ would yield $h \approx \loss$ (up to a constant) within a sufficiently large region around $\bfphi_t^{(k-1)}$. 

Different from past works that rely on a simple preselected $h$ to model loss geometries, we here acquire a data-driven $h$ by learning a strictly increasing $\nabla h$ that best fits the given tasks. Interestingly,~\eqref{eq:MD} can be reformulated to yield an update of the dual vector $\mathbf{z}_t := \nabla h(\bfphi_t)$ as
\begin{equation}
\label{eq:MD-dual}
	\mathbf{z}_t^{(k)} = \mathbf{z}_t^{(k-1)} - \alpha \nabla \loss \big( (\nabla h)^{-1}(\mathbf{z}_t^{(k-1)}); \data{trn}_t \big)
\end{equation}
with $\mathbf{z}_t^{(0)} = \nabla h(\bfphi^{(0)})$ and $\hat{\bfphi}_t = (\nabla h)^{-1} (\mathbf{z}_t^{(K)})$. Hence, it suffices to learn a strictly increasing $(\nabla h)^{-1}$ and a task-invariant dual initialization $\mathbf{z}^{(0)} := \nabla h(\bfphi^{(0)})$, thus removing the need for directly calculating $\nabla h$. 

\subsection{Learning the inverse mirror map via blockIAF}
Inspired by this observation, a prudent option is to model $(\nabla h)^{-1}$ as an inverse autoregressive flow (IAF)~\cite{IAF}. The notable benefit of IAF lies in its efficient parallelization of forward computation, that makes it considerably faster than computing its inverse. However, directly applying the dimension-wise IAF to the high-dimensional $\mathbf{z}_t \in \mathbb{R}^d$ will incur prohibitively high complexity of $\Omega(d^2)$. For this reason, we introduce a novel blockIAF model that effectively reduces complexity by performing block-wise (nonlinear) autoregression on a low-dimensional space encoding $\mathbf{z}_t$. To this end, let $\{ \blk_i \}_{i=1}^B$ be a partition of the index set $\{ 1,\ldots,d \}$, and $[\mathbf{z}_t]_{\blk_i}$ denote the subvector of $\mathbf{z}_t$ restricted to the block $\blk_i$. The blockIAF model transforms $\mathbf{z}_t$ to $\bfphi_t$ through
\begin{subequations}
\label{eq:blkIAF}
\begin{align}
	\label{eq:blkIAF-scale-shift}
	[\bfphi_t]_{\blk_i} &= [\mathbf{z}_t]_{\blk_i} \odot \sigma ( \bfalpha_i ) + \bfmu_i \\
	\label{eq:blkIAF-encode-decode}
	[\bfalpha_i^\top, \bfmu_i^\top]^\top &= d_i \big( \{ e_j([\mathbf{z}_t]_{\blk_j}) \}_{j=1}^{i-1} \big), ~i=1,\ldots,B
\end{align}	
\end{subequations}
where nonlinearity $\sigma$ is positive and upper bounded (e.g., logistic function), $\sigma(\bfalpha_i), \bfmu_i \in \mathbb{R}^{|\blk_i|}$ are the scale and shift of $[\mathbf{z}_i]_{\blk_i}$, $e_i$ and $d_i$ denote learnable encoder and decoder for the $i$-th block, and $\odot$ is the Hadamard (element-wise) product. In our implementation, $\{ e_i \}_{i=1}^{B-1}$ and $\{ d_i \}_{i=1}^B$ are multilayer perceptrons (MLPs) with ReLU activations. To further reduce complexity, all linear layers in MLPs are implemented by tensor mode product~\cite{MetaCurvature}. This technique is equivalent to a low-rank Kronecker approximation to MLPs' weight matrices. This lowers the per-step MD complexity to $\mathcal{O} (d)$. 

The following theorem characterizes two important properties of the proposed blockIAF model. 

\begin{theorem}
\label{theo:blkIAF}
Let $g: \mathbb{R}^{d} \mapsto \mathbb{R}^{d}$ be the blockIAF model~\eqref{eq:blkIAF}. For any partition $\{ \blk_i \}_{i=1}^B$, $g$ is strictly increasing, that is
\begin{equation}
\label{eq:strict-inc}
	(\mathbf{z}_t - \mathbf{z}_t')^\top (g(\mathbf{z}_t) - g(\mathbf{z}_t')) > 0, ~~\forall \mathbf{z}_t \ne \mathbf{z}_t'.
\end{equation}
Moreover, there exists a constant $C > 0$ such that 
\begin{equation}
    \nabla (g^{-1}) (\bfphi_t) \succeq C.
\end{equation}
\end{theorem}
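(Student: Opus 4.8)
The plan is to exploit the block-autoregressive (``triangular'') structure of~\eqref{eq:blkIAF}. Because $[\bfphi_t]_{\blk_i}$ depends on $[\mathbf{z}_t]_{\blk_i}$ only through the entrywise positive scaling $\sigma(\bfalpha_i)$, while $(\bfalpha_i,\bfmu_i)$ are functions of the strictly earlier subvectors $[\mathbf{z}_t]_{\blk_1},\ldots,[\mathbf{z}_t]_{\blk_{i-1}}$ alone, the Jacobian $\nabla g(\mathbf{z}_t)$ is block lower-triangular with $i$-th diagonal block $\diag(\sigma(\bfalpha_i))$, which is positive definite since $\sigma>0$. Hence $\det\nabla g(\mathbf{z}_t)>0$ for every $\mathbf{z}_t$, $g$ is $C^1$, and $g$ is a bijection: given $\bfphi_t$, block back-substitution recovers $[\mathbf{z}_t]_{\blk_1}$ (whose scale and shift are constants), then $[\mathbf{z}_t]_{\blk_2}$, and so on, each step well posed because $\sigma>0$. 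This preliminary structure underlies both claims.

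For the monotonicity claim~\eqref{eq:strict-inc}, I would write $(\mathbf{z}_t-\mathbf{z}_t')^\top(g(\mathbf{z}_t)-g(\mathbf{z}_t'))=\sum_{i=1}^B [\mathbf{z}_t-\mathbf{z}_t']_{\blk_i}^\top([g(\mathbf{z}_t)]_{\blk_i}-[g(\mathbf{z}_t')]_{\blk_i})$ and single out the smallest block index $i^\star$ with $[\mathbf{z}_t]_{\blk_{i^\star}}\ne[\mathbf{z}_t']_{\blk_{i^\star}}$. For $i<i^\star$ the two inputs, hence $(\bfalpha_i,\bfmu_i)$ and the output block $[g(\cdot)]_{\blk_i}$, coincide and $[\mathbf{z}_t-\mathbf{z}_t']_{\blk_i}=\mathbf{0}$, so those summands vanish. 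For $i=i^\star$ the parameters $(\bfalpha_{i^\star},\bfmu_{i^\star})$ are computed from the common earlier blocks, so $[g(\mathbf{z}_t)-g(\mathbf{z}_t')]_{\blk_{i^\star}}=([\mathbf{z}_t-\mathbf{z}_t']_{\blk_{i^\star}})\odot\sigma(\bfalpha_{i^\star})$, and the corresponding summand equals $\sum_{l\in\blk_{i^\star}}[\mathbf{z}_t-\mathbf{z}_t']_l^2\,\sigma(\bfalpha_{i^\star})_l>0$ by positivity of $\sigma$. The technical heart---and the step I expect to be the main obstacle---is the remaining indices $i>i^\star$, where $(\bfalpha_i,\bfmu_i)$ genuinely differ between $\mathbf{z}_t$ and $\mathbf{z}_t'$ so the summands are a priori sign-indefinite; here I would re-use the triangular structure, e.g., by re-expressing $g(\mathbf{z}_t)-g(\mathbf{z}_t')$ as a telescoping sum along a coordinate path $\mathbf{z}_t'\to\mathbf{z}_t$ that updates one block at a time, so the dominant increment on each newly changed block is a strictly positive diagonal scaling, and then showing the aggregate stays positive (a quantitative use of $\sigma$ being bounded, or a mild extra structural condition on the decoders, may enter at this point).

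For the lower bound $\nabla(g^{-1})(\bfphi_t)\succeq C$, I would apply the inverse function theorem: $g^{-1}$ is $C^1$ with $\nabla(g^{-1})(\bfphi_t)=[\nabla g(g^{-1}(\bfphi_t))]^{-1}$. Since the inverse of a block lower-triangular matrix is again block lower-triangular with the reciprocal diagonal blocks, the $i$-th diagonal block of $\nabla(g^{-1})(\bfphi_t)$ is $\diag(1/\sigma(\bfalpha_i))$. As $\sigma$ is upper bounded, say $\sigma\le M<\infty$ (with $M=1$ for the logistic), every diagonal entry of $\nabla(g^{-1})(\bfphi_t)$ is at least $1/M$, which yields the claimed bound with $C=1/M$, uniformly in $\bfphi_t$ and independent of the partition $\{\blk_i\}_{i=1}^B$. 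I would close by noting that this $C$ is precisely the strong-convexity modulus of the distance-generating $h$ required to make the Bregman divergence in~\eqref{eq:MD-greedy} and the MD update~\eqref{eq:MD} well defined, so the theorem certifies blockIAF as a valid and expressive family of mirror maps.
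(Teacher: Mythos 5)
Your structural setup (block-triangular Jacobian with diagonal blocks $\diag(\sigma(\bfalpha_i))$, bijectivity by back-substitution) is correct and matches the paper's, but the proof of the monotonicity claim~\eqref{eq:strict-inc} is genuinely incomplete at exactly the step you flag: the summands for $i>i^\star$ are never shown to be dominated by the positive $i^\star$ term, and the telescoping/quantitative argument you sketch for them is left as a hope rather than carried out. The paper takes a different route here: it writes $(\mathbf{z}_t-\mathbf{z}_t')^\top(g(\mathbf{z}_t)-g(\mathbf{z}_t'))=\int_0^1(\mathbf{z}_t-\mathbf{z}_t')^\top\nabla g(\alpha\mathbf{z}_t+(1-\alpha)\mathbf{z}_t')(\mathbf{z}_t-\mathbf{z}_t')\,d\alpha$ and reduces everything to positivity of the quadratic form $v^\top\nabla g\,v$ for the (nonsymmetric) Jacobian, which it then asserts follows from the permuted Jacobian being block-triangular with positive-definite diagonal blocks. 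You should be aware that this assertion is itself the delicate point, and your instinct about where the obstacle lies is sound: a triangular matrix with positive diagonal has positive eigenvalues and determinant, but its quadratic form need not be positive, e.g.\ $v^\top\bigl(\begin{smallmatrix}1&0\\ 10&1\end{smallmatrix}\bigr)v<0$ for $v=(1,-1)^\top$. So whichever decomposition one uses---your first-differing-block split or the paper's integral of the Jacobian quadratic form---closing the argument requires controlling the off-diagonal blocks $\partial[g]_{\blk_i}/\partial[\mathbf{z}]_{\blk_j}$, $i>j$ (i.e.\ the sensitivity of the decoders $d_i$ to earlier blocks) relative to the scales $\sigma(\bfalpha_i)$; without some such quantitative condition the cross-block terms can overwhelm the diagonal ones. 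That control is the missing ingredient in your proposal, and it is the substantive mathematical content of the theorem.

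The same issue propagates to your second claim. Your inverse-function-theorem route is equivalent to the paper's explicit inversion of~\eqref{eq:blkIAF-scale-shift}, and both correctly identify the diagonal blocks of $\nabla(g^{-1})$ as $\diag(1/\sigma(\bfalpha_i))\succeq 1/M$; but passing from ``every diagonal entry is at least $1/M$'' to ``$\nabla(g^{-1})\succeq C$'' again ignores the off-diagonal blocks of a nonsymmetric triangular matrix, so $C=1/M$ is not justified as a bound on the quadratic form. In short, your proposal is faithful to the paper's strategy in outline and is commendably honest about where the difficulty sits, but it does not close the cross-block coupling step for either claim, and that step is not a routine detail.
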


Theorem~\ref{theo:blkIAF} asserts that with $(\nabla h)^{-1} = g$, one ensures the desired strict monotonicity, and strong convexity of the induced $h$ (by noting that $\nabla^2 h = \nabla (g^{-1})$). As a result, the per-task optimization~\eqref{eq:MD-dual} enjoys the standard convergence guarantee of MD. Although the convergence rate of MD is in the same order as GD, it outperforms GD markedly in the constant factor when $d$ is large~\cite{MD-highdim}, and relies on more relaxed assumptions~\cite{SMD}. 

\begin{table*}[t]
\caption{Comparison of meta-learning algorithms with different loss geometry models on the $5$-class miniImageNet dataset. Maximum and mean accuracies within its $95\%$ confidence interval are in bold. (No 
model ensembling for a fair comparison.)}
\vspace{-0.7cm}
\label{tab:perf-comp}
\begin{center}
\begin{tabular}{@{}lcccc@{}}
\toprule
\multirow{2}*{Method} & \multirow{2}*{Lower-level optimizer} & \multirow{2}*{Loss geometry model} & \multicolumn{2}{c}{$5$-class accuracies}  \\
 & & & $1$-shot & $5$-shot \\
\midrule
MAML~\cite{MAML} & GD & identity matrix & $48.70 \pm 1.84 \%$ & $63.11 \pm 0.92 \%$ \\
MetaSGD~\cite{MetaSGD} & PGD & diag. matrix & $50.47 \pm 1.87 \%$ & $64.03 \pm 0.94 \%$ \\
MT-net~\cite{MT-net} & PGD & block diag. matrix & $51.70 \pm 1.84 \%$ & $-$ \\
WarpGrad~\cite{WarpGrad} & PGD & NN-based low-rank matrix & $52.3 \pm 0.8 \%$ & $68.4 \pm 0.6 \%$ \\
MetaCurvature
\cite{MetaCurvature} & PGD & block diag. \& Kron. (low-rank) matrix & $54.23 \pm 0.88 \%$ & $67.99 \pm 0.73 \%$ \\
MetaKFO~\cite{MetaKFO} & NN-transformed GD & NN-based gradient transformation & $-$ & $64.9 \%$ \\
ECML~\cite{ECML} & PGD & Gauss-Newton approximation & $48.94 \pm 0.80 \%$ & $65.26 \pm 0.67 \%$ \\
\hline
This paper's method & MD & blockIAF-based mirror map & $\mathbf{56.10 \pm 1.43\%}$  & $\mathbf{69.59 \pm 0.71\%}$ \\
\bottomrule
\end{tabular}
\end{center}
\end{table*}

\begin{figure*}[t]
\vspace{-0.77cm}
\centering
\begin{subfigure}[t]{0.385\textwidth}
\includegraphics[width=\columnwidth]{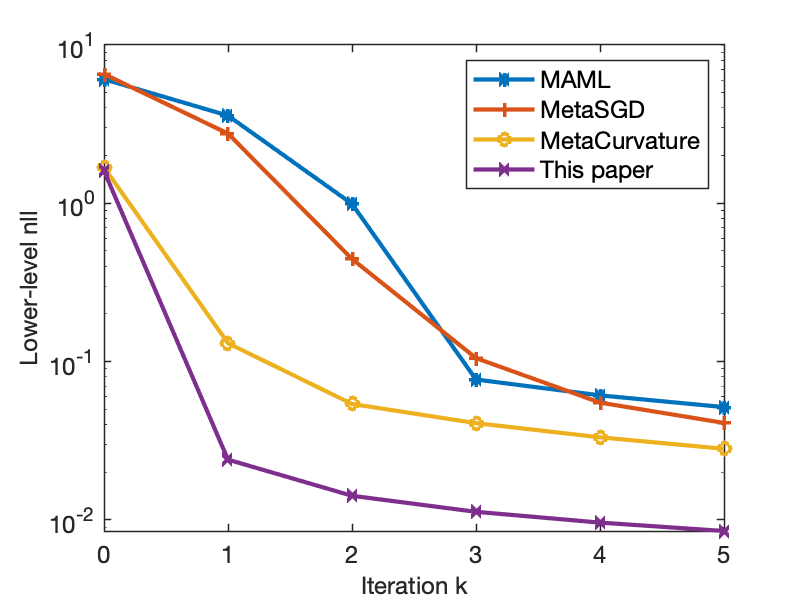}
\vspace{-0.5cm}
\caption{$\loss (\bfphi_\star^{(k)}; \data{trn}_\star)$ versus $k$}
\label{subfig:nll}
\end{subfigure}
\begin{subfigure}[t]{0.385\textwidth}
\includegraphics[width=\columnwidth]{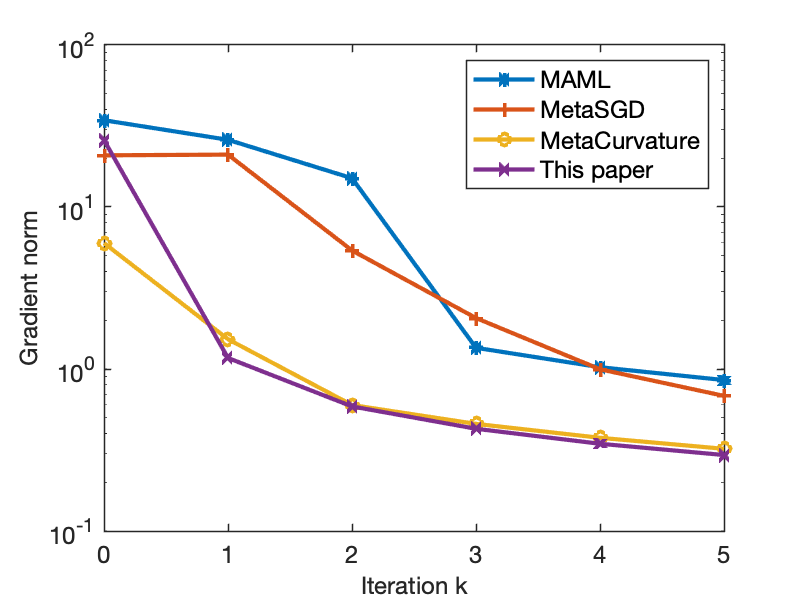}
\vspace{-0.5cm}
\caption{$\| \nabla \loss (\bfphi_\star^{(k)}; \data{trn}_\star) \|_2$ versus $k$}
\label{subfig:grad-norm}
\end{subfigure}
\vspace{-0.25cm}
\caption{Convergence comparison on randomly sampled new tasks.}
\label{fig:convergence}
\vspace{-0.45cm}
\end{figure*}

The meta-learning objective~\eqref{eq:bilevel-global} is solved using alternating optimization. With $\bftheta_g$ denoting the blockIAF parameters, let $\bftheta := [\mathbf{z}^{(0)\top}, \bftheta_g^\top]^\top$ be the prior parameter vector. In the $(r)$-th iteration of~\eqref{eq:bilevel-global-upper}, the optimizer has access to $\bftheta^{(r-1)}$ provided by its last iteration, and a batch of randomly sampled tasks $\tasks^{(r)} \subset \{1,\ldots,T\}$. The optimizer first solves $\hat{\bfphi}_t (\bftheta^{(r-1)})$ for each $t \in \tasks^{(r)}$ leveraging the $K$-step MD~\eqref{eq:MD-dual}. Then, $\bftheta^{(r-1)}$ is updated using mini-batch stochastic GD with step size $\beta$:
\begin{equation*}
	\bftheta^{(r)} = \bftheta^{(r-1)} - \beta\frac{T}{|\tasks^{(r)}|} \sum_{t \in \tasks^{(r)}} \nabla_{\bftheta^{(r-1)}} \loss (\hat{\bfphi}_t (\bftheta^{(r-1)}); \data{val}_t).
\end{equation*}
A summary of the algorithm can be found in Appendix~\ref{app:alg}. 

\section{Numerical tests}
Here we compare the empirical performance of optimization-based meta-learning using different lower-level optimizers, on the standard few-shot classification dataset miniImageNet~\cite{MatchingNets}, where ``shots'' signify the per-class training data for each $t$. The task-specific model is a standard $4$-layer convolutional NN (CNN)~\cite{MatchingNets, MAML}. Each layer comprises a $3 \times 3$ convolution of $64$ channels, batch normalization, ReLU activation, and $2 \times 2$ max pooling module. After the convolutional layers, a linear regressor with softmax activation is appended to perform classification. Subset $\blk_i$ is formed by the weight indices of the $i$-th CNN layer. The autoregression in~\eqref{eq:blkIAF-encode-decode} implies that ``how to optimize weights of the $i$-th layer'' depends on ``how weights of previous layers have been optimized.'' This choice enables blockIAF to model the optimization dependency of high-level features (e.g., textures and patterns) on low-level ones (e.g., colors and edges). Test setups and hyperparameters can be found in Appendix~\ref{app:test-setup}. 

Table~\ref{tab:perf-comp} lists various loss geometry models, where classification accuracy on new tasks is the figure of merit. For fairness, MAML is the backbone of all methods. By utilizing a more versatile loss geometry model, our approach outperforms the state-of-the-art ones by a large margin. 

To further gauge the performance gain achieved by our novel approach, Fig.~\ref{fig:convergence} visualizes the convergence of $\loss (\bfphi_\star^{(k)}; \data{trn}_\star)$ averaged on $1,000$ random new tasks. The proposed method results in faster convergence to a lower and more stable nll compared with all three competitors. Moreover, Fig.~\ref{subfig:nll} reveals that both the proposed method and MetaCurvature improve the initialization compared to MAML and MetaSGD. This confirms that convergence and generalization of~\eqref{eq:bilevel-global-upper} relies on the convergence accuracy of $\hat{\bfphi}_t$~\cite{iMAML,iBaML}. Fig.~\ref{subfig:grad-norm} further illustrates that although the initial gradients of different methods have comparable norms $\| \nabla \loss (\bfphi_\star^{(0)}; \data{trn}_\star) \|_2$, our method can make better use of the gradient, leading to a rapid reduction of the nll as well as its gradient norm at $k=1$. This improved gradient utilization highlights our method's superior modeling of loss geometries.

\section{Conclusions and outlook}
Versatile loss geometry models can accelerate the lower-level convergence in meta-learning. A novel BlockIAF model is introduced to learn the inverse mirror map $(\nabla h)^{-1}$ induced by a strongly convex $h$. The resultant algorithm generalizes preconditioning-based meta-learning, captures versatile loss geometries, and improves lower-level convergence. Effectiveness of the novel approach was validated on a standard few-shot dataset. Future research includes bi-level convergence guarantees for the proposed method, and development of more expressive yet scalable inverse mirror maps. 
\clearpage

\bibliographystyle{IEEEbib}
\bibliography{refs}


\clearpage
\onecolumn
\appendix
\setcounter{theorem}{0}
\begin{center}
{\huge \bf Appendix}
\end{center}

\section{Proof of Theorem 1}
\label{app:proof}
\begin{theorem}[Restated]
Let $g: \mathbb{R}^{d} \mapsto \mathbb{R}^{d}$ denote the blockIAF model~\eqref{eq:blkIAF}. For any partition $\{ \blk_i \}_{i=1}^B$, $g$ is strictly increasing, that is 
\begin{equation*}
	(\mathbf{z}_t - \mathbf{z}_t')^\top (g(\mathbf{z}_t) - g(\mathbf{z}_t')) > 0, ~~\forall \mathbf{z}_t \ne \mathbf{z}_t'.
\end{equation*}
Moreover, there exists a constant $C > 0$ such that 
\begin{equation*}
    \nabla (g^{-1}) (\bfphi_t) \succeq C.
\end{equation*}
\end{theorem}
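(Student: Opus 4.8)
The plan is to exploit the block-triangular (autoregressive) structure of $g$. Writing $\bfphi_t = g(\mathbf{z}_t)$ blockwise as in \eqref{eq:blkIAF}, observe that $[\bfphi_t]_{\blk_i}$ depends on $\mathbf{z}_t$ only through $[\mathbf{z}_t]_{\blk_1},\ldots,[\mathbf{z}_t]_{\blk_i}$, and that the dependence on the ``current'' block $[\mathbf{z}_t]_{\blk_i}$ is the affine map $\mathbf{u}\mapsto \mathbf{u}\odot\sigma(\bfalpha_i)+\bfmu_i$ with $\bfalpha_i,\bfmu_i$ fixed functions of the earlier blocks. Hence the Jacobian $\nabla g(\mathbf{z}_t)$ is block lower-triangular, and its $i$-th diagonal block equals $\diag(\sigma(\bfalpha_i))$, which is positive definite because $\sigma>0$. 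This immediately gives invertibility of $g$ (the inverse is computed block-by-block: recover $[\mathbf{z}_t]_{\blk_1}$, then $[\mathbf{z}_t]_{\blk_2}$, etc.), so $g^{-1}$ is well defined and differentiable with $\nabla(g^{-1})(\bfphi_t)=[\nabla g(\mathbf{z}_t)]^{-1}$ at $\bfphi_t=g(\mathbf{z}_t)$.

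For strict monotonicity \eqref{eq:strict-inc} I would argue blockwise and inductively rather than via the Jacobian, since the Jacobian is only triangular, not symmetric. Fix $\mathbf{z}_t\ne\mathbf{z}_t'$ and let $i^\star$ be the smallest block index on which they differ, i.e. $[\mathbf{z}_t]_{\blk_j}=[\mathbf{z}_t']_{\blk_j}$ for $j<i^\star$ and $[\mathbf{z}_t]_{\blk_{i^\star}}\ne[\mathbf{z}_t']_{\blk_{i^\star}}$. Then for every $j<i^\star$ the encoded features $e_j(\cdot)$ agree, so $\bfalpha_{i^\star},\bfmu_{i^\star}$ are the same for both inputs, and therefore
\begin{equation*}
	[\bfphi_t]_{\blk_{i^\star}}-[\bfphi_t']_{\blk_{i^\star}} = \big([\mathbf{z}_t]_{\blk_{i^\star}}-[\mathbf{z}_t']_{\blk_{i^\star}}\big)\odot\sigma(\bfalpha_{i^\star}),
\end{equation*}
which yields
\begin{equation*}
	\big([\mathbf{z}_t]_{\blk_{i^\star}}-[\mathbf{z}_t']_{\blk_{i^\star}}\big)^\top\big([\bfphi_t]_{\blk_{i^\star}}-[\bfphi_t']_{\blk_{i^\star}}\big) = \sum_{\ell\in\blk_{i^\star}} \sigma([\bfalpha_{i^\star}]_\ell)\,\big([\mathbf{z}_t]_\ell-[\mathbf{z}_t']_\ell\big)^2 > 0 .
\end{equation*}
The issue is that blocks $j>i^\star$ may contribute terms of either sign to the full inner product $(\mathbf{z}_t-\mathbf{z}_t')^\top(g(\mathbf{z}_t)-g(\mathbf{z}_t'))$, so a single-block argument does not close it. The cleanest fix is to show the stronger diagonal statement first: I would establish that $g$ is ``blockwise strictly increasing'' in the sense that for any two inputs agreeing on all blocks except $\blk_i$, the inner product restricted to $\blk_i$ is strictly positive (this is exactly the display above with $i$ in place of $i^\star$). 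Then, given arbitrary $\mathbf{z}_t\ne\mathbf{z}_t'$, I would interpolate along the blocks: define a chain $\mathbf{z}^{(0)}=\mathbf{z}_t, \mathbf{z}^{(1)},\ldots,\mathbf{z}^{(B)}=\mathbf{z}_t'$ where $\mathbf{z}^{(i)}$ equals $\mathbf{z}_t'$ on blocks $\blk_1,\ldots,\blk_i$ and $\mathbf{z}_t$ on the rest — however, this does not directly telescope into the single quadratic form $(\mathbf{z}_t-\mathbf{z}_t')^\top(g(\mathbf{z}_t)-g(\mathbf{z}_t'))$ either. The genuinely correct route, and the one I would commit to, is to prove that $g=\nabla h^{*}$ for a convex potential — equivalently that $\nabla g$, though not symmetric, still has a positive-definite symmetric part, or better, to directly verify that $g$ is the gradient of the convex function one gets by integrating; if that fails, fall back to the statement actually needed downstream, namely that $\nabla(g^{-1})=(\nabla g)^{-1}\succeq C I$ and that $g$ is a bijection, and note that \eqref{eq:strict-inc} in the paper should be read as the monotonicity inherited from $(\nabla g)^{-1}$ being the Jacobian of a strongly convex $h$. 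I expect \textbf{this coupling across blocks} to be the main obstacle: reconciling the triangular (hence generally non-monotone) Jacobian with the claimed global strict monotonicity.

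For the second claim, $\nabla(g^{-1})(\bfphi_t)\succeq C$, I would proceed as follows. Since $\nabla(g^{-1})(\bfphi_t)=[\nabla g(\mathbf{z}_t)]^{-1}$ and $\nabla g(\mathbf{z}_t)$ is block lower-triangular with diagonal blocks $\diag(\sigma(\bfalpha_i))$, I need a uniform lower bound on the smallest singular value of $\nabla g$ over all $\mathbf{z}_t\in\mathbb{R}^d$. The diagonal entries $\sigma([\bfalpha_i]_\ell)$ are bounded: $\sigma$ is positive and upper bounded (stated after \eqref{eq:blkIAF}), say $0<\sigma(\cdot)\le M$; a positive \emph{lower} bound $\sigma(\cdot)\ge m>0$ does not hold globally for the logistic function, so the bound must instead come from boundedness of the arguments $\bfalpha_i$. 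Here I would use that each $\bfalpha_i=d_i(\{e_j(\cdot)\}_{j<i})$ is an MLP with ReLU activations applied to \emph{bounded} intermediate quantities — but the inputs $[\mathbf{z}_t]_{\blk_j}$ are not bounded, so this needs care. The resolution the paper seems to intend is that the relevant composition yields $\bfalpha_i$ confined to a bounded set (e.g. because $\sigma$ appears inside, or because the encoders saturate), so $\sigma(\bfalpha_i)\ge m\,\mathbf{1}$ for a uniform $m>0$. Granting that, the off-diagonal (strictly lower-triangular) blocks of $\nabla g$ have operator norm bounded by some $L$ (Lipschitz constants of the ReLU MLPs, again under the boundedness assumption), and a standard estimate for the inverse of a triangular matrix with diagonal blocks $\succeq m I$ and off-diagonal norm $\le L$ gives $\|[\nabla g]^{-1}\|_{\mathrm{op}}\le$ some explicit function of $m,L,B$, hence $[\nabla g]^{-1}\succeq C I$ with $C$ depending only on $m,L,B$, not on $\mathbf{z}_t$. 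I would state the needed boundedness of $\{\bfalpha_i\}$ as the key lemma, prove it by induction on $i$ using that $\sigma$ is bounded and the encoders/decoders have bounded output (or are applied to bounded features), and then assemble the triangular-inverse bound. The main delicacy throughout is making the ``everything is bounded'' claims rigorous given that the raw dual iterates $\mathbf{z}_t$ live in all of $\mathbb{R}^d$; I expect that to be where the real work lies, the linear-algebra bound on the triangular inverse being routine once the uniform $m,L$ are in hand.
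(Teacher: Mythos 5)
You follow the same structural route as the paper's own proof: permute coordinates by blocks so that the Jacobian $\nabla g$ is block-triangular with $i$-th diagonal block $\diag(\sigma(\bfalpha_i)) \succ 0$, try to deduce \eqref{eq:strict-inc} from positivity of the quadratic form of $\nabla g$ along the segment joining $\mathbf{z}_t$ and $\mathbf{z}_t'$, and bound $\nabla(g^{-1})$ via the diagonal blocks $\diag^{-1}(\sigma(\bfalpha_i))$ of the (also triangular) inverse Jacobian together with the upper bound on $\sigma$. The difference is that you stop at precisely the step the paper asserts without justification: after noting the block-triangular structure and the positive-definite diagonal blocks, the paper writes ``it thus holds that $\nabla_{[\mathbf{z}_t]_\pi}[g(\mathbf{z}_t)]_\pi \succ 0$'' (their \eqref{eq:g-deriv-PD}) and integrates. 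Your refusal to take that step is well founded. For a non-symmetric matrix, block-triangularity plus positive-definite diagonal blocks does \emph{not} imply positivity of the quadratic form: with $A = \left[\begin{smallmatrix} 1 & 0 \\ 10 & 1 \end{smallmatrix}\right]$ and $\mathbf{v} = (1,-1)^\top$ one gets $\mathbf{v}^\top A \mathbf{v} = -8 < 0$. The off-diagonal blocks of $\nabla g$ are Jacobians of the unconstrained encoder/decoder MLPs, so nothing in \eqref{eq:blkIAF} rules out exactly this situation. The ``coupling across blocks'' you flag is therefore a real obstruction, and your candidate repairs (the least-index-of-disagreement argument, the blockwise chain) do not close it, as you yourself observe: the later blocks can contribute negative terms of arbitrary magnitude to the inner product.

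The same issue afflicts the second claim, in both your sketch and the paper's. The eigenvalues of a block-triangular matrix are those of its diagonal blocks, so a \emph{spectral} reading of $\nabla(g^{-1}) \succeq \min_i 1/\|\sigma(\bfalpha_i)\|_\infty \succeq C$ is fine; but the theorem is used to conclude strong convexity of $h$ via $\nabla^2 h = \nabla(g^{-1})$, which requires a lower bound on the \emph{symmetric part} (indeed, it requires $\nabla(g^{-1})$ to be symmetric at all, which a generic triangular matrix is not). That stronger statement again needs control of the off-diagonal MLP Jacobians relative to the diagonal, which is exactly the uniform bound on $\bfalpha_i$ and the Lipschitz constant $L$ you identify as the missing lemma — and which, as you note, cannot be obtained for free since $\mathbf{z}_t$ ranges over all of $\mathbb{R}^d$. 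Bottom line: your proposal is incomplete — you do not actually establish \eqref{eq:strict-inc} or the claimed bound on $\nabla(g^{-1})$ — but the gap you isolate is genuine and is present, unaddressed, in the paper's own proof; a correct version of the theorem needs either an added hypothesis bounding the off-diagonal Jacobians (e.g.\ $\sigma(\bfalpha_i) \ge m$ with $m$ large relative to the MLPs' Lipschitz constants) or a weakening of $\succ 0$ and $\succeq C$ to statements about the spectrum of the triangular Jacobian only.
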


\begin{proof}
Let $\pi := [\blk_1, \ldots, \blk_B]$ denote a permutation of $\{1,\ldots,n\}$, and $\mathbf{Q}_{\pi} \in \mathbb{R}^{d \times d} := \big[ [\mathbf{I}_d]_{\blk_1}, \ldots, [\mathbf{I}_d]_{\blk_B} \big]$ the permutation matrix under $\pi$, where $[ \mathbf{I}_d ]_{\blk_i}$ is the submatrix of the identity $\mathbf{I}_d \in \mathbb{R}^{d \times d}$ restricted to the columns indexed by $\blk_i$. 

Consider the partial derivatives (cf.~\eqref{eq:blkIAF})
\begin{equation}
\label{eq:partial-deriv}
	\frac{\partial [ g(\mathbf{z}_t)]_{\blk_i}}{\partial [\mathbf{z}_t]_{\blk_j} } = \frac{\partial [\bfphi_t (\mathbf{z}_t)]_{\blk_i}}{\partial [\mathbf{z}_t]_{\blk_j} } = 
	\begin{cases}
		\text{a $|\blk_i| \times |\blk_j|$ matrix}, &\text{if}~i > j \\
		\diag(\sigma(\bfalpha_i)), &\text{if}~i=j \\
		\mathbf{0}_d, &\text{otherwise}
	\end{cases}.
\end{equation}
It can be verified that the Jacobian $\nabla_{[\mathbf{z}_t]_\pi} [g(\mathbf{z}_t)]_\pi$ of the permuted parameters is block-upper-triangular, with the $i$-th diagonal block given by $\frac{\partial [g(\mathbf{z}_t)]_{\blk_i}}{\partial [\mathbf{z}_t]_{\blk_i} } = \diag(\sigma(\bfalpha_i)) \succ 0$. It thus holds that $\nabla_{[\mathbf{z}_t]_\pi} [g(\mathbf{z}_t)]_\pi \succ 0$, or equivalently, $\mathbf{Q}_\pi^\top \nabla g(\mathbf{z}_t) \mathbf{Q}_\pi \succ 0$, which implies that 
\begin{equation}
\label{eq:g-deriv-PD}
	\nabla g(\mathbf{z}_t) \succ 0,~\forall \mathbf{z}_t \in \mathbb{R}^d \;.
\end{equation} 
Letting $\tilde{g} (\alpha) := g(\alpha \mathbf{z}_t + (1-\alpha) \mathbf{z}_t')$, it holds for $\forall \mathbf{z}_t \ne \mathbf{z}_t'$
\begin{align}
	(\mathbf{z}_t - \mathbf{z}_t')^\top (g(\mathbf{z}_t) - g(\mathbf{z}_t')) &= (\mathbf{z}_t - \mathbf{z}_t')^\top (\tilde{g}(1) - \tilde{g}(0)) \nonumber \\
    &= (\mathbf{z}_t - \mathbf{z}_t')^\top \int_0^1 \tilde{g}'(\alpha)d\alpha  \nonumber \\
	&= \int_0^1 (\mathbf{z}_t - \mathbf{z}_t')^\top \nabla g(\alpha \mathbf{z}_t + (1-\alpha) \mathbf{z}_t') (\mathbf{z}_t - \mathbf{z}_t') d\alpha > 0
\end{align}
where the inequality follows from~\eqref{eq:g-deriv-PD}. 

Next, upper bounding $\sigma \le 1/C$, we will show that $\nabla (g^{-1}) (\bfphi_t) \succeq C$ for some constant $C > 0$. To obtain the inverse $g^{-1}$, notice that~\eqref{eq:blkIAF-scale-shift} can be readily rewritten as
\begin{equation*}
		[\mathbf{z}_t]_{\blk_i} = ([\bfphi_t]_{\blk_i} - \bfmu_i) \odot 1 / \sigma (\bfalpha_i ). 
\end{equation*}
where $/$ is the element-wise division. Similar to~\eqref{eq:partial-deriv}, it can be easily verified that the Jacobian $\nabla_{[\bfphi_t]_\pi} [(g^{-1})(\bfphi_t)]_\pi$ is also block-upper-triangular, with $i$-th diagonal block $\frac{\partial [(g^{-1})(\bfphi_t)]_{\blk_i}}{\partial [\bfphi_t]_{\blk_i} } = \diag^{-1}(\sigma(\bfalpha_i)) \succeq C$. 

As a result, we have that 
\begin{equation*}
	\nabla (g^{-1})(\bfphi_t) \succeq \min_{i=1,\ldots,B} 1 / \| \sigma ( \bfalpha_i ) \|_{\infty} \succeq C
\end{equation*}
which completes the proof.
\end{proof}

\section{Summary of the algorithm}
\label{app:alg}

\begin{algorithm}[h]
\caption{Meta-learning with MD and blockIAF}
\KwIn{$\{ \data{}_t \}_{t=1}^T$, step sizes $\alpha$ and $\beta$, maximum number of iterations $K$ and $R$, and blockIAF mirror map $\nabla h$.}
\KwInit{randomly initialize $\bftheta^{(0)} = [\mathbf{z}^{(0)\top}, \bftheta_g^\top]^\top$.}
\For{$r = 1, \ldots, R$}{
Randomly sample a mini-batch of tasks $\tasks^{(r)} \subset \{ 1,\ldots, T\}$\;
\For{$t \in \mathcal{T}^{(r)}$}{
	Initialize $\mathbf{z}_t^{(0)} = \mathbf{z}^{(0)}$\;
	\For{$k = 1, \ldots, K$}{
		Map $\bfphi_t^{(k-1)} (\bftheta^{(r-1)}) = (\nabla h)^{-1} (\mathbf{z}_t^{(k-1)} (\bftheta^{(r-1)}); \bftheta_g^{(r-1)})$\; 
  		Descend $\mathbf{z}_t^{(k)} (\bftheta^{(r-1)}) = \mathbf{z}_t^{(k-1)} (\bftheta^{(r-1)}) - \alpha \nabla \loss \big( \bfphi_t^{(k-1)} (\bftheta^{(r-1)}); \data{trn}_t \big)$\;
        }
    Map $\hat{\bfphi}_t (\bftheta^{(r-1)}) = (\nabla h)^{-1} (\mathbf{z}_t^{(K)} (\bftheta^{(r-1)}); \bftheta_g^{(r-1)})$\;
    }
Update $\bftheta^{(r)} = \bftheta^{(r-1)} - \beta \frac{T}{|\mathcal{T}^r|} \sum_{t \in \mathcal{T}^r} \nabla_{\bftheta^{(r-1)}} \loss(\hat{\bfphi}_t (\bftheta^{(r-1)}); \data{val}_t)$\;
}
\KwOut{$\hat{\bftheta} = \bftheta^{(R)}$.}
\end{algorithm}

\section{Numerical setups}
\label{app:test-setup}
This section elaborates further on the dataset and setups of the numerical tests.  

The miniImageNet dataset is a few-shot classification dataset comprising natural images from $100$ classes, each containing $600$ samples. All images are cropped and resized to $84 \times 84$, as suggested by~\cite{Meta-LSTM}. The $100$ classes are disjointly divided into $3$ groups with corresponding size $64$, $20$ and $16$, which are available to the meta-training, meta-validation, and meta-testing phases, respectively. The task setups follow from the standard $M$-class $N$-shot few-shot learning protocol~\cite{Meta-LSTM, MAML}. In particular, $\data{trn}_t$ per task $t$ contains $M$ classes randomly drawn from the dataset, each consisting of $N$ labeled data. It is easy to see that $| \data{trn}_t | = MN, ~\forall t$. Likewise, $\data{val}_t$ is constructed in a manner akin to $\data{trn}_t$, albeit with each class comprising $15$ labeled data. 

The hyperparameters used in the tests are the same as those used by MAML~\cite{MAML}, and are listed in Table~\ref{tab:hyperparam}. Our implementation relies on PyTorch, and codes are available at~\url{https://github.com/zhangyilang/MetaMirrorDescent}. 

\begin{table*}[h]
\caption{Hyperparameter setup for the numerical tests.}
\vspace{-0.5cm}
\label{tab:hyperparam}
\begin{center}
\begin{tabular}{lcccc}
\toprule
Hyperparameter & Notation & Value  \\
\midrule
Lower-level iterations & $K$ & $5$ \\
Lower-level learning rate & $\alpha$ & $10^{-2}$ \\
Upper-leve iterations & $R$ & $60,000$ \\
Upper-level learning rate & $\beta$ & $10^{-3}$ \\
Upper-level SGD batch size & $|\tasks^{(r)}|$ & $4$ \\
\bottomrule
\end{tabular}
\end{center}
\end{table*}

All the MLPs used in blockIAF have three fully-connected layers with ReLU nonlinearity, and with the weight matrix of each layer Kronecker factorized~\cite{MetaCurvature}. Let $\text{size}_i := d_{i,1} \times d_{i,2} \times \ldots \times d_{i,O_i}$ be the size of the original tensor corresponding to the vector $[\bfphi_t]_{\blk_i}$, where $O_i$ is the total order of the tensor, and $\prod_{j=1}^{O_i} d_{i,j} = |\blk_i|$. Each layer of the encoder $e_i$ outputs a tensor with  dimensionality of half size. This implies that the output tensor of the $l$-th layer of $e_i$ has size $\lfloor \frac{\text{size}_i}{2^l} \rfloor := \lfloor \frac{d_{i,1}}{2^l} \rfloor \times \lfloor \frac{d_{i,2}}{2^l} \rfloor \times \ldots \times \lfloor \frac{d_{i,O_i}}{2^l} \rfloor, ~l=1,2,3$. The decoder $d_i$ first vectorizes and concatenates the embeddings provided by $\{ e_j \}_{j=1}^{i-1}$, maps this concatenated embedding vector to $\lfloor \frac{\text{size}_i}{8} \rfloor$, and recovers the tensor to $\text{size}_i$ by performing the inverse size operations of $e_i$; that is, its $l$-th layer changes the tensor size from $\lfloor \frac{\text{size}_i}{2^{4-l}} \rfloor$ to $\lfloor \frac{\text{size}_i}{2^{3-l}} \rfloor$. 

\section{Complexity analysis}
Next, complexity comparison is implemented to justify the effectiveness of the introduced blockIAF model. To showcase the computational efficiency, numerical complexities are assessed using the 5-class 5-shot miniImageNet dataset. In the test, the blockIAF-based mirror map incurs a $9.1\%$ increase of forward and backpropagation time compared to the basic GD update in MAML. This slight increment confirms the claimed low complexity of the proposed approach.
\end{document}